\pdfoutput=1
\documentclass{article}

\usepackage{iclr2027_conference,times}
\iclrfinalcopy

\usepackage{amsmath,amsfonts,bm}

\def\eqref#1{equation~\ref{#1}}

\def\1{\bm{1}}

\DeclareMathAlphabet{\mathsfit}{\encodingdefault}{\sfdefault}{m}{sl}
\SetMathAlphabet{\mathsfit}{bold}{\encodingdefault}{\sfdefault}{bx}{n}

\usepackage{amsmath}
\usepackage{amssymb}
\usepackage{amsthm}
\usepackage{booktabs}
\usepackage{graphicx}
\usepackage{multirow}
\usepackage{listings}
\usepackage{xcolor}
\usepackage{url}
\usepackage{hyperref}
\hypersetup{colorlinks=true,linkcolor=blue,citecolor=blue,urlcolor=blue}

\definecolor{codebg}{rgb}{0.96,0.96,0.96}
\newtheorem{proposition}{Proposition}

\title{CASI: Credit Assignment via Set Intervention\\ for Agentic Retrieval}

\author{
  Xinyi Cheng \quad Tianfu Zhang \quad Jiwen Li \quad Xuesen Zhang \\
  \textbf{Shihao Yu \quad Pengfei Zhu \quad Renhuan \quad Heda Wang}\thanks{Corresponding author.} \\
  Xiaohongshu Inc. \\
  \texttt{\{chengxinyi1, zhangtianfu, liuwei01, zhangxuesen, }\\ \texttt{yushihao, zhupengfei, renhuan, zengshushu\}@xiaohongshu.com}
}

\begin{document}

\maketitle

\lhead{Preprint}

\begin{abstract}
Large language models increasingly generate multiple retrieval actions jointly,
while downstream systems often provide only one reward for the resulting set.
Broadcasting this reward provides a valid score-function gradient estimator for
the expected set-level objective, but does not determine a coalition-dependent
within-set attribution. We introduce \textbf{CASI} (Credit Assignment via Set
Intervention for Agentic Retrieval), which constructs the missing counterfactual information through
snapshot-grounded set intervention: generate and retrieve once, hold replayable
snapshot components fixed, and evaluate alternative subsets under common
conditions. Shapley aggregation converts these marginals into within-set credit,
which bounded zero-sum advantage shaping incorporates into policy optimization.
For $m\ge4$, singleton, leave-one-out, and full-set endpoints do not generally
determine the Shapley allocation; under additive set values, extra coalition
evaluations are redundant. In a five-action sponsored-search setting,
exhaustive auction replay over 1,200 requests reveals substantial coalition
dependence. Under the training-time evaluator, CASI achieves the highest median
cosine similarity to a replay-defined Shapley reference among evaluated credit
rules; under a sensitivity backend, CASI does not outperform Singleton. Across
five independent training runs, CASI-trained policies achieve higher held-out
auction-replay set value than Singleton- or Broadcast-trained policies. In a
randomized seven-day production A/B test on 10\% of traffic, the CASI-trained
policy yields an observed \textbf{3.51\%} lift in the primary advertiser-value
metric over production control.
\end{abstract}
\section{Introduction}
\label{sec:intro}

LLM-based retrieval systems increasingly generate multiple retrieval actions
for a single input, for example by decomposing a question or producing parallel
rewrites that cover different facets of an information need~\citep{ragr1,parallelsearch,parallelr1}.
Post-training on downstream outcomes directly optimizes what the system
ultimately delivers rather than surface similarity alone~\citep{searchr1}.
This challenge is especially pronounced in sponsored search, where several
rewrites retrieve candidate ads whose pools are merged before downstream
ranking and auction. An action that is useful alone may become redundant, or
compete with candidates from other rewrites, once the pools are combined. In
our setting, individual actions do not receive separately observed downstream
outcomes; only the realized action set receives an observed reward. This raises
a central question: \emph{when multiple retrieval actions jointly determine one
outcome, what information is needed to construct context-dependent attribution
within the realized set?}

A rollout-level reward provides a valid score-function estimator for the
gradient of the expected set-level objective~\citep{deepseekmath}. The
difficulty is that the realized set reward does not provide the
alternative-subset values needed for coalition-based attribution. For an
action $q_i$ and $S\subseteq A\setminus\{q_i\}$, the marginal
$R(S\cup\{q_i\})-R(S)$ can vary with $S$. Singleton and leave-one-out capture
the empty-context and full-complement extremes, leaving intermediate coalition
contexts unobserved. For $m\ge4$, these endpoint observations, even together
with the full-set value, do not generally determine the Shapley allocation.
The challenge is therefore to construct comparable cross-context information
within each realized action set.

We introduce \textbf{CASI} (\emph{Credit Assignment via Set Intervention for Agentic Retrieval}),
which constructs this information through \emph{snapshot-grounded set
intervention}. For each realized action set, CASI generates and retrieves once,
caches per-action retrieval results, holds replayable snapshot components
fixed, and evaluates alternative subsets under common conditions. Shapley
weighting converts the resulting contextual marginals into within-set credit,
and bounded zero-sum advantage shaping incorporates their relative credit into
the rollout-level optimizer. Reusing cached retrieval outputs keeps exhaustive
coalition scoring practical in our five-action setting without repeated
generation, retrieval, or auction replay during RL.

Empirically, exhaustive auction replay over $1{,}200$ requests reveals
substantial coalition dependence. Under the training-time evaluator, CASI
attains the largest median cosine similarity to the replay-defined Shapley
reference among the compared credit rules: $0.424$ versus $0.388$ for
Singleton, a difference in median cosine of $0.036$ (95\% CI
$[0.004,0.068]$). Under the sensitivity backend, CASI does not outperform
Singleton on this metric. Across five independent training runs, the
CASI-trained policy achieves a held-out auction-replay set value of
$66.70\pm3.40$, compared with $60.32$ for the SFT initialization,
$55.00\pm4.00$ for Singleton, and $43.30\pm4.70$ for Broadcast. Finally, in a
randomized seven-day production A/B test on $10\%$ of traffic, we observe a
\textbf{3.51\%} lift in the primary advertiser-value metric over production
control.

Our contributions are threefold. First, we formalize the information gap
between valid set-level optimization and coalition-based within-set attribution
for jointly generated retrieval actions. We show that, for $m\ge4$, singleton,
leave-one-out, and full-set endpoint observations do not generally determine
the Shapley allocation, while under additive set values additional coalition
evaluations are redundant. Second, we introduce snapshot-grounded set
intervention to construct comparable coalition evaluations within a realized
action set while holding replayable snapshot components fixed. Shapley
weighting aggregates these evaluations into action-level credit, and bounded
zero-sum advantage shaping incorporates the relative credit into policy
optimization. Third, we evaluate CASI through coalition structure, credit-rule
alignment with the replay-defined Shapley reference, held-out policy value
across independent training runs, and randomized production deployment.
\section{Related Work}
\label{sec:related_work}

\subsection{LLM Search and Outcome-Based Post-Training}

LLMs increasingly learn retrieval actions from downstream outcomes rather than
surface similarity. Search-R1~\citep{searchr1} trains multi-turn search with
reinforcement learning, while RAG-R1~\citep{ragr1} and parallel-search
methods~\citep{parallelsearch} extend this paradigm to multiple queries or
parallel decomposition. Outcome-based post-training typically assigns
supervision at the response or trajectory level: GRPO~\citep{deepseekmath}
derives a scalar group-relative advantage for each sampled response, whereas
process supervision provides finer feedback along a reasoning
trajectory~\citep{lightman2023verify}. These methods refine where supervision
is assigned across responses or trajectories. Our setting concerns a different
axis: several retrieval actions are generated jointly within one response and
share one downstream outcome, raising the question of what counterfactual
information is needed for context-dependent attribution within the realized
action set.

\subsection{Credit Assignment under Aggregate Feedback}

Shared-reward credit assignment is well studied in cooperative reinforcement
learning. COMA constructs an agent-specific counterfactual baseline by
marginalizing one agent's action while holding others
fixed~\citep{coma}. Standard difference rewards instead remove or replace one
agent~\citep{difference_rewards}. Value-decomposition methods impose structure
on joint value, additively in VDN~\citep{vdn} and monotonically in
QMIX~\citep{qmix}. Shapley-based methods aggregate marginal contributions
across coalitions and have also been used for policy
learning~\citep{shapley_counterfactual,shapley_q}. Slate off-policy evaluation
provides another structural precedent, recovering per-slot components from
slate-level feedback under a specified model~\citep{slate_ope}.

These works provide set-to-element decomposition and Shapley-style allocation
machinery. The methodological gap in our setting is the counterfactual object
on which cross-context within-set credit can be computed. Slate decomposition
relies on an additive conditional-mean reward model, whereas retrieved
candidate pools in our setting are merged and jointly processed by a nonlinear,
capacity-limited downstream system. COMA marginalizes one action against a
fixed configuration of the others, and a standard difference reward uses a
chosen removal or replacement counterfactual. Neither single-context
construction, by itself, provides the family of coalition contexts needed when
the same retrieval action contributes differently across predecessor
coalitions $S$.

Our distinction is therefore not the Shapley weighting itself, but the
snapshot-grounded construction of comparable coalition evaluations for
parameter-shared LLM retrieval actions, without requiring an agent-wise critic
or a factored value model.

\subsection{Value-Aware Retrieval for Sponsored Search}

Sponsored-search retrieval has evolved from query substitution and dense
retrieval toward generative and value-aware rewriting. Prior work includes
relevance--revenue trade-offs~\citep{relevance_revenue_ads}, diversity-aware RL
rewriting~\citep{clover}, hybrid generative and dense
retrieval~\citep{clover_unity}, contextual
retrieval~\citep{contextual_ads_retrieval}, and value-aware LLM
rewriting~\citep{value_ads_rewriting}. Our focus is complementary: these lines
improve how rewrites are produced or scored, whereas we study within-set credit
after several rewrites are issued jointly and their candidate pools are merged,
deduplicated, ranked, and selected together. In this capacity-limited regime, a
response-level outcome does not by itself determine attribution among parallel
actions, and an action's marginal value need not be invariant to the
surrounding action set.
\section{Method}
\label{sec:method}

We study policy optimization for jointly generated retrieval actions under
set-level feedback. \textbf{CASI} (\emph{Credit Assignment via Set Intervention for Agentic
Retrieval}) constructs alternative coalition evaluations within each
realized action set while holding replayable snapshot components fixed.
Shapley weighting aggregates the resulting contextual marginals, and bounded
zero-sum advantage shaping maps their relative credit to action-level
advantages.

\begin{figure*}[t]
    \centering
    \includegraphics[width=1\textwidth]{figures/framework.pdf}
    \caption{
\textbf{CASI augments the broadcast rollout advantage with action-specific
credit from snapshot-grounded set intervention.}
The base optimizer broadcasts $\hat A^{(g)}$ across the trajectory. Cached
retrievals are recombined into coalitions and scored under the frozen
training-side snapshot $\mathcal{E}_0$, yielding coalition values
$\hat R_0(S)$. Shapley aggregation produces per-action credit, which bounded zero-sum
advantage shaping maps to offsets $z_i^{(g)}$. Tokens of action $q_i$ receive
$\widetilde A_i^{(g)}=\hat A^{(g)}+\beta z_i^{(g)}$; shared tokens retain
$\hat A^{(g)}$.
}
    \label{fig:casi_framework}
\vspace{-15pt}
\end{figure*}

\subsection{Problem Formulation and Coalition-Dependent Credit}
\label{sec:problem}

Given an input $x$, the policy $\pi_\theta$ jointly generates
$A=\{q_1,\ldots,q_m\}\sim\pi_\theta(\cdot\mid x)$.
Each action retrieves candidates $C_{i,t}=\mathcal{R}_t(q_i)$, which are merged
as $C_t(A)=\operatorname{Merge}(C_{1,t},\ldots,C_{m,t})$. We denote the
production-side set utility by
$R_t(A)=\mathcal{V}(C_t(A);\mathcal{E}_t)$, where $t$ indexes the production
state associated with the realized input $x$, and take the production-level
objective to be
\begin{equation}
    J(\theta)
    =
    \mathbb{E}_{x}
    \mathbb{E}_{A\sim\pi_\theta(\cdot\mid x)}
    [R_t(A)].
\end{equation}

The contribution of an action can depend on the surrounding coalition.
For any $S\subseteq A\setminus\{q_i\}$, define the contextual marginal
\begin{equation}
    \Delta^{R}_{i,t}(S)
    =
    R_t(S\cup\{q_i\})-R_t(S).
    \label{eq:marginal}
\end{equation}
In general, $\Delta^{R}_{i,t}(S)$ varies with $S$ because retrieval coverage,
redundancy, and capacity-limited downstream selection depend on which other
actions are present. These production-side counterfactual marginals are not
directly observed from the realized set-level outcome. To construct tractable
within-set comparisons for training, we instead use a frozen surrogate set
function defined below. Appendix~\ref{app:negative} gives examples in which
contextual marginal value can be negative.

\paragraph{Set-level optimization and within-set information.}
For any rollout-level scalar return $F(A)$ with no explicit dependence on
$\theta$, broadcasting the same scalar across the generated sequence gives
the standard score-function estimator
\begin{equation}
\nabla_\theta
\mathbb{E}_{A\sim\pi_\theta(\cdot\mid x)}[F(A)]
=
\mathbb{E}
\left[
    F(A)
    \sum_{u\in\tau(A)}
    \nabla_\theta\log\pi_\theta(u\mid u_{<},x)
\right],
\label{eq:broadcast_pg}
\end{equation}
where $\tau(A)$ is the generated token sequence. Thus, broadcasting $F(A)$
provides a valid score-function estimator for the gradient of the expected
set-level objective. However, $F(A)$ contains only the outcome of the realized
set. It does not determine an attribution whose definition depends on
counterfactual coalition values $F(S)$ for alternative subsets
$S\subseteq A$. Additional full-set rollouts provide more joint outcomes, but
do not directly provide the alternative-subset evaluations needed to construct
such within-set credit for a fixed realized set.

\subsection{Snapshot-Grounded Set Intervention}
\label{sec:snapshot}

CASI constructs alternative coalition evaluations from the same realized
action set under a frozen training-side snapshot $\mathcal{E}_0$;
non-replayable factors remain outside the counterfactual construction. Each
action is retrieved once,
\begin{equation}
    C_i=\mathcal{R}_0(q_i),
    \qquad
    C(S)=\operatorname{Merge}(\{C_i:q_i\in S\}).
\end{equation}
Coalitions are formed by selecting and merging cached per-action results, so
the generated actions, retrievals, and replayable snapshot components remain
fixed while only coalition composition changes. The resulting candidate sets
are scored either by the frozen evaluator for policy optimization or by
auction replay for offline auditing, without new LLM generation or repeated
retrieval.

\paragraph{Offline auction-replay measurement.}
For offline auditing, let $y^{(r)}(S)$ denote the value returned by the
$r$-th execution of the auction-replay path for coalition $S$:
\begin{equation}
    y^{(r)}(S)
    =
    \sum_a
    \mathrm{ecpm}^{(r)}_{\mathrm{auction},S}(a)\,
    \mathrm{send}^{(r)}_S(a),
    \label{eq:replay_value}
\end{equation}
where $\mathrm{ecpm}^{(r)}_{\mathrm{auction},S}(a)$ denotes the auction-stage
eCPM associated with ad $a$ in that replay execution and
$\mathrm{send}^{(r)}_S(a)\in\{0,1\}$ indicates membership in the replay send
set. The send set is an auction-stage outcome rather than final production
exposure. Replay outcomes can vary across executions, so a single
$y^{(r)}(S)$ is an offline replay measurement rather than a time-invariant
production ground truth. Section~\ref{sec:exp_setup} specifies how repeated
executions are aggregated to construct the coalition-audit reference.

\paragraph{Training-time evaluator.}
Auction replay is too expensive to execute for every coalition during policy
optimization, so CASI scores each reconstructed coalition with the frozen
evaluator and denotes the resulting set value by $\hat R_0(S)$. The
corresponding contextual marginal is
\begin{equation}
    \widehat{\Delta}_{i,0}(S)
    =
    \hat R_0(S\cup\{q_i\})-\hat R_0(S).
    \label{eq:surrogate_marginal}
\end{equation}
Coalition values are obtained from cached retrieval results and batched with
the frozen evaluator. CASI changes which values of the fixed surrogate set
function $\hat R_0(\cdot)$ are queried; it does not correct errors in that
evaluator.

\paragraph{Audit boundary.}
\label{sec:fidelity}
The offline evaluator-to-replay analysis compares evaluator-defined coalition
structure and credit with the repeat-aggregated replay reference defined in
Section~\ref{sec:exp_setup}. It does not treat that reference as calibrated
production utility or replay coalition contrasts as production-side contrasts
of $R_t$. Production performance of the learned policy is evaluated separately
in Section~\ref{sec:online}.

\subsection{Coalition-Aware Credit Aggregation}
\label{sec:credit}

We extend the surrogate set function to the empty coalition by the zero-value
convention $\hat R_0(\emptyset)=0$; the evaluator is not queried on the empty
set. Because the contextual marginal $\widehat{\Delta}_{i,0}(S)$ can vary with
the predecessor coalition, a credit rule based on a single context captures
only one part of the induced set function. CASI uses standard Shapley weighting
to aggregate marginals across predecessor coalitions:
\begin{equation}
    c_i^{\mathrm{CASI}}
    =
    \sum_{S\subseteq A\setminus\{q_i\}}
    \frac{|S|!(m-|S|-1)!}{m!}
    \left[
        \hat R_0(S\cup\{q_i\})-\hat R_0(S)
    \right].
    \label{eq:coalition_credit}
\end{equation}
For this extended set function, Shapley efficiency gives
\begin{equation}
    \sum_{i=1}^{m} c_i^{\mathrm{CASI}}
    =
    \hat R_0(A).
\end{equation}
This convention fixes the absolute credit baseline. Holding all non-empty
coalition values fixed, any alternative constant empty-set reference shifts
all Shapley credits equally; the within-rollout centering below removes this
common offset.

\paragraph{Endpoint credit rules.}
We compare against two cheaper rules that use a single predecessor size:
\noindent
\begin{minipage}[t]{0.48\linewidth}
\begin{equation}
    c_i^{\mathrm{Singleton}}
    =
    \hat R_0(\{q_i\})-\hat R_0(\emptyset).
    \label{eq:singleton_credit}
\end{equation}
\end{minipage}
\hfill
\begin{minipage}[t]{0.48\linewidth}
\begin{equation}
    c_i^{\mathrm{LOO}}
    =
    \hat R_0(A)-\hat R_0(A\setminus\{q_i\}).
    \label{eq:loo_credit}
\end{equation}
\end{minipage}
Singleton uses predecessor size $|S|=0$, whereas leave-one-out (LOO) uses
$|S|=m-1$. The following result characterizes the information missing from
these endpoint observations. For a generic set function $v$, let $\phi(v)$
denote its standard Shapley allocation.

\begin{proposition}[Endpoint non-identifiability of the Shapley allocation]
\label{prop:nonident}
Let $m\ge 4$ and let $v:2^{A}\to\mathbb{R}$ be a set function with
$v(\varnothing)=0$. The Shapley allocation $\phi(v)$ is not, in general,
determined by the endpoint observations
\[
\big\{v(\{q_i\})\big\}_{i=1}^{m},\qquad
\big\{v(A\setminus\{q_i\})\big\}_{i=1}^{m},\qquad
v(A).
 \]
\end{proposition}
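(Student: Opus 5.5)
Because $\phi$ is linear in $v$, it suffices to find a nonzero set function $w$ with $w(\varnothing)=0$ that vanishes on every endpoint observation, i.e.\ $w(\{q_i\})=0$, $w(A\setminus\{q_i\})=0$ for all $i$, and $w(A)=0$, yet has $\phi(w)\neq 0$. Then for any $v$, the functions $v$ and $v+w$ agree on all endpoint observations but have Shapley allocations that differ by $\phi(w)$. This proves non-identifiability, and the same perturbation works around any base game, for instance $v\equiv 0$.

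To build $w$, use the coalition sizes strictly between $1$ and $m-1$; for $m\ge4$ this includes size $2$. Set $w(S)=1$ if $S=\{q_1,q_2\}$ and $w(S)=0$ otherwise. This coalition is neither a singleton, nor a complement of a singleton (its size $2$ is less than $m-1\ge 3$), nor $A$. So $w$ vanishes on every endpoint observation and on $\varnothing$.

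The Shapley computation then reduces to counting the marginals in which $\{q_1,q_2\}$ appears.
\begin{itemize}
\item For $q_1$, the coalition enters as $S\cup\{q_1\}$ with $S=\{q_2\}$, giving marginal $+1$ with weight $1!\,(m-2)!/m!=1/(m(m-1))$. It also enters as $S=\{q_1,q_2\}$, but only for players $j\ge3$.
\item For $j\ge 3$, the marginal at $S=\{q_1,q_2\}$ is $w(\{q_1,q_2,q_j\})-w(\{q_1,q_2\})=-1$, with weight $2!\,(m-3)!/m!=2/(m(m-1)(m-2))$.
\end{itemize}
Hence $\phi_1(w)=\phi_2(w)=1/(m(m-1))>0$ and $\phi_j(w)=-2/(m(m-1)(m-2))$ for $j\ge3$. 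These are nonzero and sum to $0$, consistent with efficiency since $w(A)=0$.

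The only step requiring care is checking that $m\ge4$ is exactly what places size-$2$ coalitions strictly inside the unobserved middle layer, since $2<m-1$. For $m=3$, size $2$ coincides with the leave-one-out layer, and the construction fails, as the statement's restriction suggests. One could also note that $\phi_1$ has strictly positive weight on the size-$2$ layer, so any symmetric middle-layer perturbation would work. The explicit single-coalition example is cleanest, and it can be presented concretely as two games $v=0$ and $v'=w$ with identical endpoint data but different credit.
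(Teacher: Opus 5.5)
Your proof is correct and is essentially the paper's argument: perturb a single coalition lying strictly between the singleton and leave-one-out layers (you take $|T|=2$, the paper allows any $2\le|T|\le m-2$), and note that for a member of that coalition exactly one Shapley term changes; computing the full vector $\phi(w)$ and framing the argument via linearity are only cosmetic differences. One caution: if ``symmetric'' in your closing aside means permutation-invariant, that aside is false and should be dropped, since raising every size-$2$ coalition by the same amount gives a permutation-invariant $w$ with $w(\varnothing)=w(A)=0$, whose Shapley allocation is identically zero by symmetry and efficiency, so positive Shapley weight on a layer does not rule out cancellation across coalitions.
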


\begin{proof}
Choose a coalition $T$ with $2\le |T|\le m-2$, which exists for $m\ge4$.
Let $F$ be any set function with $F(\varnothing)=0$ and define $G$ by
$G(T)=F(T)+\delta$ for some $\delta\neq0$, with $G(S)=F(S)$ for every
$S\neq T$. Since $T$ is neither a singleton, an $(m-1)$-coalition, nor the
full set, $F$ and $G$ agree on all listed endpoint observations. For any
$i\in T$, the Shapley term with predecessor set
$S=T\setminus\{q_i\}$ changes by
\[
    \frac{(|T|-1)!(m-|T|)!}{m!}\,\delta\neq0,
\]
while no other term for that $i$ is affected. Hence
$\phi_i(G)\neq\phi_i(F)$ and therefore $\phi(G)\neq\phi(F)$.
\end{proof}

Under additive set values,
\[
    v(S)=\sum_{q_i\in S}a_i,
\]
every contextual marginal equals $a_i$, so Singleton, LOO, and Shapley
coincide. Additional coalition evaluations are therefore redundant in the
additive case. Outside this case, intermediate coalitions can contain
allocation-relevant information that is not determined by endpoint
observations. The proposition concerns information sufficiency for a fixed
set function; whether additional surrogate coalition evaluations are accurate
or useful is evaluated empirically. Alternative aggregation weights,
including Banzhaf weighting, are evaluated separately in
Appendix~\ref{app:credit}.

\subsection{Bounded Zero-Sum Advantage Shaping}
\label{sec:optimization}

For a rollout group $\mathcal{G}$, let $\hat A^{(g)}$ denote the rollout-level
advantage supplied by the base optimizer for rollout $g\in\mathcal{G}$. In our
implementation, this advantage is formed from the full-set surrogate reward
$\hat R_0(A^{(g)})$. Given action credits
$\{c_i^{(g)}\}_{i=1}^{m}$, we first center them within each rollout:
\begin{align}
    d_i^{(g)}
    &=
    c_i^{(g)}
    -
    \frac{1}{m}\sum_{j=1}^{m}c_j^{(g)},
    &
    \sum_{i=1}^{m}d_i^{(g)}
    &=0.
\end{align}
We then normalize the centered credits using a scale shared across the rollout
group:
\begin{align}
    s_d
    &=
    \operatorname{Std}\!\left(\{d_i^{(g)}\}_{i,g}\right),
    &
    u_i^{(g)}
    &=
    \frac{d_i^{(g)}}{s_d+\epsilon},
\end{align}
where $\epsilon>0$ is a numerical stabilizer. Using a common scale avoids
separately normalizing away between-rollout differences in relative-credit
magnitude before projection.

To bound the action-level adjustment while preserving its zero-sum structure,
we project $u^{(g)}$ onto
\[
    \mathcal{Z}_{\kappa}
    =
    \left\{
        z\in\mathbb{R}^{m}:
        \sum_i z_i=0,\;
        \|z\|_{\infty}\le\kappa
    \right\}.
\]
The Euclidean projection has the coordinate form
\begin{equation}
    z_i^{(g)}
    =
    \operatorname{clip}
    \left(
        u_i^{(g)}-\lambda_g,
        -\kappa,
        \kappa
    \right),
    \qquad
    \sum_i z_i^{(g)}=0,
    \label{eq:zero_sum_projection}
\end{equation}
where $\kappa>0$ is the clipping radius and $\lambda_g$ is obtained by
monotone bisection. Appendix~\ref{app:zero_sum} gives the projection
derivation.

The resulting action-level advantage is
\begin{equation}
    \widetilde A_i^{(g)}
    =
    \hat A^{(g)}+\beta z_i^{(g)},
    \label{eq:action_advantage}
\end{equation}
where $\beta\ge0$ controls the strength of credit-aware shaping. Since
$\sum_i z_i^{(g)}=0$,
\begin{equation}
    \frac{1}{m}
    \sum_{i=1}^{m}
    \widetilde A_i^{(g)}
    =
    \hat A^{(g)}.
    \label{eq:zero_sum}
\end{equation}
Thus the shaping term changes relative action-level coefficients while
preserving their within-rollout mean. This does not imply equality with the
broadcast policy gradient, because different action spans multiply different
score functions. Setting $\beta=0$ recovers the Broadcast advantage
assignment.

For structured generations, tokens belonging to action $q_i$ receive
$\widetilde A_i^{(g)}$, while shared reasoning and formatting tokens retain
the base rollout advantage $\hat A^{(g)}$. We treat this as structured
advantage assignment rather than causal token attribution.

\subsection{Computational Cost}
\label{sec:complexity}

All methods use one LLM generation and $m$ cached retrievals; they differ in
the number of frozen-evaluator scores required for credit construction. The
empty coalition is fixed by convention and is not scored. Counting the
full-set score used by the base advantage, Broadcast, Singleton, LOO, and
exact all-coalition aggregation require
\[
    1,\qquad
    m+1,\qquad
    m+1,\qquad
    2^m-1
\]
scores, respectively. Exact CASI and exact Banzhaf aggregation share the
$2^m-1$ all-coalition scoring cost. For $m=5$, the corresponding counts for
Broadcast, Singleton, LOO, and CASI are $1/6/6/31$.
All non-empty coalition scores are computed from cached retrieval results and
batched with the frozen evaluator; the 31 CASI scores are not 31 auction
replays during training. Appendix~\ref{app:complexity} describes permutation
sampling for larger action sets.

\section{Experiments}
\label{sec:experiments}

\subsection{Experimental Setup}
\label{sec:exp_setup}

We study $m=5$ jointly generated retrieval actions. Policy optimization uses
the frozen evaluator $\hat R_0$ defined in Section~\ref{sec:snapshot}, while
offline counterfactual auditing uses the auction-replay procedure defined
there. Historical $V_0$ is used only in the observational audit below.

The training-time evaluator is a cached-filter construction: it processes the
full candidate pool once and obtains coalition values by filtering the cached
downstream result. All primary evaluator-to-replay comparisons use this
evaluator. As a sensitivity analysis, a coalition-rerun evaluator re-executes
downstream ranking for each coalition; because this defines a different
surrogate set function, we do not treat it as an evaluator-error correction.

The offline audit samples $1{,}200$ requests before observing replay outcomes
and enumerates all $31$ non-empty coalitions. Each coalition is replayed three
times; a $50$-request subset receives seven repeats, for $117{,}800$
executions in total. For the coalition audit, we denote the median across
available replay executions by
\[
    y_{\mathrm{audit}}(S)
    =
    \operatorname{median}_{r} y^{(r)}(S),
\]
and use it as the repeat-aggregated coalition reference. Bootstrap inference
resamples requests, and no request is filtered by its full-set replay outcome.

\subsection{Coalition Dependence and Evaluator Fidelity}
\label{sec:exp_offline}

The repeat-aggregated replay landscape exhibits substantial coalition
dependence. On the $1{,}171$ requests with matched training-evaluator values,
corresponding to $11{,}710$ action pairs, $64.4\%$ of pair interactions are
negative, $7.4\%$ positive, and $28.2\%$ exactly zero, with median absolute
interaction $3.0978$. The training-time evaluator agrees with the replay
reference on $92.25\%$ of pair-interaction signs
(95\% CI $[0.9140,0.9295]$).

\begin{figure*}[t]
\centering
\includegraphics[width=0.98\textwidth]{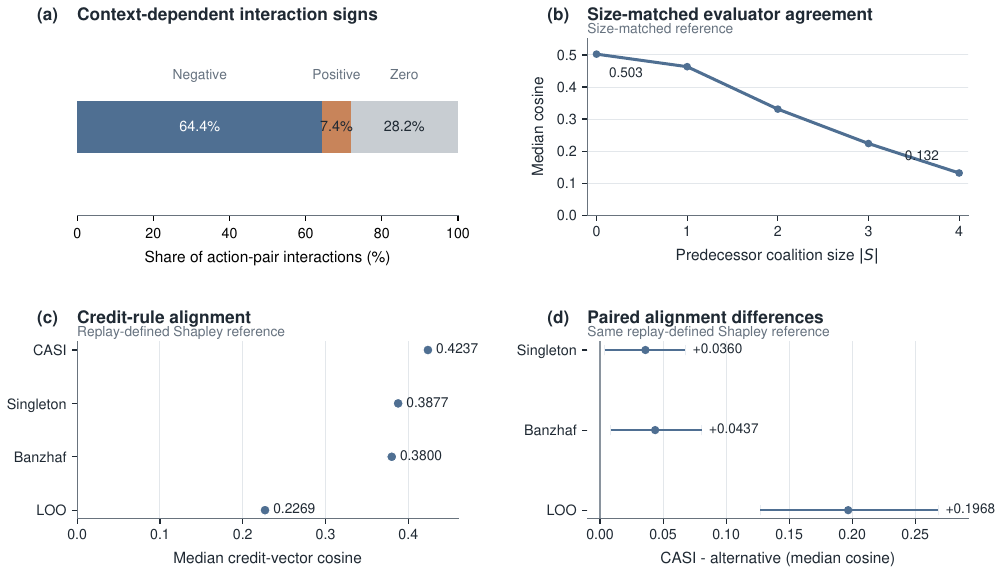}
\caption{
\textbf{Coalition dependence and credit alignment.}
Panels (a)--(b) diagnose context-dependent replay structure and size-resolved
evaluator agreement. Panels (c)--(d) compare credit rules against the same
replay-defined Shapley reference. Because panel (b) uses size-matched
references, its values are not directly comparable with panels (c)--(d).
}
\label{fig:coalition_credit}
\end{figure*}

Full-set outcomes alone do not characterize this structure. Although
$870/1{,}200$ requests satisfy $y_{\mathrm{audit}}(A)>0$, $103/330$ requests
with $y_{\mathrm{audit}}(A)=0$ still contain at least one nonzero proper
subcoalition. Filtering on the observed full-set outcome would therefore
discard requests containing informative coalition structure.

Across nonzero replay insertion edges, the training-time evaluator achieves
$59.25\%$ sign agreement overall. We therefore treat individual
evaluator-defined marginals as imperfect local estimates rather than accurate
action-credit labels.

Size-resolved diagnostics in Table~\ref{tab:size_fidelity} show that
evaluator--replay agreement generally deteriorates as predecessor coalitions
grow. The CASI row instead compares Shapley-aggregated credit with the
replay-defined Shapley reference and is shown only for context.

\begin{table}[!ht]
\centering

\caption{
Size-resolved evaluator--replay agreement under the training-time evaluator.
Rows $|S|=k$ compare request-centered action-credit vectors formed by
averaging marginals over predecessor coalitions of size $k$, using the same
construction for evaluator and replay. The CASI row instead compares
evaluator-defined Shapley credit with the replay-defined Shapley reference
and is not directly comparable across rows.
}
\label{tab:size_fidelity}
\begin{tabular}{lcccc}
\toprule
Credit context & Sign$_{\rm nz}$ & Spearman & Top action & Median cosine \\
\midrule
$|S|=0$ & 0.6424 & 0.3521 & 0.3274 & 0.5027 \\
$|S|=1$ & 0.6356 & 0.3321 & 0.3422 & 0.4635 \\
$|S|=2$ & 0.5994 & 0.2539 & 0.3365 & 0.3310 \\
$|S|=3$ & 0.5683 & 0.1749 & 0.3112 & 0.2236 \\
$|S|=4$ & 0.5381 & 0.1117 & 0.2902 & 0.1316 \\
\midrule
CASI (Shapley) & 0.6178 & 0.3128 & 0.3299 & 0.4237 \\
\bottomrule
\end{tabular}
\end{table}

\subsection{Credit-Rule Alignment with the Replay-Defined Shapley Reference}
\label{sec:exp_credit}

Using $y_{\mathrm{audit}}(\cdot)$, we construct the replay-defined Shapley
reference and center all credit vectors within request. Because CASI also uses
Shapley aggregation, this comparison measures alignment with the chosen
allocation semantics rather than estimator-neutral true action credit.
On $1{,}171$ requests, CASI attains median cosine $0.4237$ versus $0.3877$
for Singleton, a difference in median cosine of $+0.0360$
(95\% CI $[+0.0036,+0.0677]$); Table~\ref{tab:estimator_fidelity}
reports the remaining comparisons.

\begin{table}[!ht]
\centering
\caption{
Alignment with the replay-defined Shapley reference under the training-time
evaluator. Median cosine is the primary metric. Confidence intervals are
request-level paired-bootstrap intervals for the CASI-minus-alternative
difference in median cosine.
}
\label{tab:estimator_fidelity}
\begin{tabular}{lcc}
\toprule
Credit rule & Median cosine $\uparrow$ & CASI $-$ rule (95\% CI) \\
\midrule
CASI      & \textbf{0.4237} & --- \\
Singleton & 0.3877 & $+0.0360\ [0.0036,0.0677]$ \\
Banzhaf   & 0.3800 & $+0.0437\ [0.0083,0.0806]$ \\
LOO       & 0.2269 & $+0.1968\ [0.1272,0.2682]$ \\
\bottomrule
\end{tabular}
\end{table}

Under the coalition-rerun evaluator, the corresponding median-cosine point
estimates are $0.6011$ for CASI and $0.6029$ for Singleton, with CASI slightly
lower on this sensitivity backend.

\paragraph{Additive correlated-error null.}
We test whether the positive CASI-minus-alternative alignment gaps can arise
mechanically from correlated evaluator residuals under an interaction-free
replay reference. Replay singleton values define the additive reference, and
a donor request's complete coalition-level evaluator residual vector is
transferred as a block, preserving the observed within-request residual
covariance. On the stricter $870$-request subset for which the required
normalized credit statistics are defined, the observed CASI--Singleton
alignment gap is $+0.0359$, versus a 95\% null interval of
$[-0.1231,-0.0685]$; for CASI--LOO, the observed gap is $+0.2221$ versus
$[-0.1462,-0.0813]$. This specific additive correlated-residual mechanism
does not reproduce the observed positive gaps. The null does not establish
general evaluator accuracy or imply that the CASI-minus-alternative alignment
gap must increase with interaction strength;
Appendix~\ref{app:estimator_fidelity} gives the full construction.

\subsection{Policy-Level Evaluation}
\label{sec:exp_main}

We next test whether differences in within-set advantage assignment translate
to held-out set-level replay value. Broadcast, Singleton, and CASI use the
same target model and training configuration and are each trained for
$500$ steps with five independent seeds. Final checkpoints are evaluated on
the same $1{,}000$ held-out requests by auction-replaying the generated
five-action set; the primary endpoint is mean full-set replay value.

\begin{table}[!ht]
\centering
\caption{
Held-out policy evaluation on $1{,}000$ common requests; RL entries report
mean $\pm$ standard deviation across five independent training runs.
}
\label{tab:policy_eval}
\begin{tabular}{lcc}
\toprule
Method & Mean replay value $\uparrow$ & Top-1 eCPM $\uparrow$ \\
\midrule
SFT       & 60.32 & 11.25 \\
Broadcast & 43.30$\,\pm\,$4.70 & 11.15$\,\pm\,$0.10 \\
Singleton & 55.00$\,\pm\,$4.00 & 11.68$\,\pm\,$0.09 \\
CASI      & \textbf{66.70$\,\pm\,$3.40} & \textbf{11.93$\,\pm\,$0.07} \\
\bottomrule
\end{tabular}
\end{table}

Across five independent training runs, CASI attains a mean held-out replay
value of $66.70\pm3.40$, compared with $55.00\pm4.00$ for Singleton and
$43.30\pm4.70$ for Broadcast; the SFT initialization obtains $60.32$ on the
same held-out request set. The CASI--Singleton contrast is $+11.70$
(95\% CI $[+6.26,+17.14]$); this run-level inference reflects training-seed
variation on the fixed held-out set, while the endpoint evaluates complete
policy outputs rather than alignment with a Shapley reference.

\subsection{Online A/B Evaluation}
\label{sec:online}

We further evaluate the CASI-trained policy in a randomized seven-day
production A/B test on a large-scale sponsored-search system. The experiment
allocates $10\%$ of production traffic to treatment, covering approximately
$10^8$ observations; the primary metric is advertiser value (Advv). Relative
to the production control, we observe a \textbf{3.51\%} lift in Advv and a
$5.08\%$ increase in search-ad spend. At the retrieval level, complex-intent
coverage rises from $30.86\%$ to $32.58\%$, while first-screen coverage rises
from $16.32\%$ to $16.99\%$. We treat these coverage changes as
mechanism-consistent diagnostics rather than as a causal decomposition of the
online business gain. This randomized comparison evaluates the deployed
CASI-trained policy against production control; it does not isolate CASI from
alternative credit constructions online.

\paragraph{Historical served-value audit.}
\label{sec:exp_served_value}
We separately audit replay against historical served value $V_0$; this is
observational rather than counterfactual
(Appendix~\ref{app:served_audit}). Among the $3{,}540$ requests with
successful replay and matched historical $V_0$, $1{,}438$ are
auction-positive; on this subset, replay has Spearman $\rho=0.346$ with
$V_0$. Holding each generated set fixed, the real-context auction-positive
rate is $40.6\%$, compared with $32.7\%$ under context shuffling, a
real-minus-shuffled difference of $+7.9$ pp (95\% CI $[+6.0,+9.8]$).
Among the $700$ requests positive under both contexts, the difference in
conditional value ranking is $+0.023$ (95\% CI $[-0.061,+0.103]$), providing
no clear evidence of improved ranking under the real context. Thus request
context materially affects auction activation, while this observational audit
does not establish replay as a counterfactual measure of production outcomes.
\section{Conclusion}
\label{sec:conclusion}

We study context-dependent within-set credit for jointly generated retrieval actions under set-level feedback. CASI constructs comparable coalition evaluations through snapshot-grounded set intervention, aggregates contextual marginals with Shapley weighting, and incorporates their relative credit through bounded zero-sum advantage shaping. We show that endpoint observations do not in general determine the Shapley allocation for \(m\ge4\), while additional coalition evaluations are redundant under additive set values. In a five-action sponsored-search setting, CASI achieves the largest median cosine similarity to the replay-defined Shapley reference under the deployed training evaluator, improves held-out auction-replay set value across independent training runs, and yields an observed \(3.51\%\) advertiser-value lift over production control in a randomized A/B test. Backend sensitivity further shows that additional coalition information does not guarantee improved credit alignment under every evaluator. These results support snapshot-grounded set intervention as a practical approach to within-set supervision when retrieval actions interact.

\section*{Use of Large Language Models}
In this work, we used generative AI tools for writing polish of author-written text, including improvements to readability and LaTeX formatting, and for literature retrieval and discovery, including searching for and identifying potentially relevant related work. We did not use generative AI tools to generate synthetic datasets, develop theoretical models or conceptual frameworks, formulate or prove mathematical claims, propose or refine hypotheses, implement methods, or interpret experimental results. All AI-assisted edits and literature suggestions were reviewed by the authors, and AI-suggested references were verified against the original sources through a manual literature survey. We take full responsibility for the final content of this work, including all text, claims, and results.

\bibliographystyle{iclr2027_conference}
\bibliography{iclr2027_conference}

\begin{thebibliography}{18}
\providecommand{\natexlab}[1]{#1}
\providecommand{\url}[1]{\texttt{#1}}
\expandafter\ifx\csname urlstyle\endcsname\relax
  \providecommand{\doi}[1]{doi: #1}\else
  \providecommand{\doi}{doi: \begingroup \urlstyle{rm}\Url}\fi

\bibitem[Foerster et~al.(2018)Foerster, Farquhar, Afouras, Nardelli, and Whiteson]{coma}
Jakob~N. Foerster, Gregory Farquhar, Triantafyllis Afouras, Nantas Nardelli, and Shimon Whiteson.
\newblock Counterfactual multi-agent policy gradients.
\newblock In \emph{Proceedings of the AAAI Conference on Artificial Intelligence}, volume~32, pp.\  2974--2982, 2018.
\newblock \doi{10.1609/aaai.v32i1.11794}.

\bibitem[Grbovic et~al.(2015)Grbovic, Djuric, Radosavljevic, Silvestri, and Bhamidipati]{contextual_ads_retrieval}
Mihajlo Grbovic, Nemanja Djuric, Vladan Radosavljevic, Fabrizio Silvestri, and Narayan Bhamidipati.
\newblock Context- and content-aware embeddings for query rewriting in sponsored search.
\newblock In \emph{Proceedings of the 38th International ACM SIGIR Conference on Research and Development in Information Retrieval}, pp.\  383--392, 2015.
\newblock \doi{10.1145/2766462.2767709}.

\bibitem[Jin et~al.(2025)Jin, Zeng, Yue, Yoon, Ar{\i}k, Wang, Zamani, and Han]{searchr1}
Bowen Jin, Hansi Zeng, Zhenrui Yue, Jinsung Yoon, Sercan~{\"O}. Ar{\i}k, Dong Wang, Hamed Zamani, and Jiawei Han.
\newblock Search-{R1}: Training {LLMs} to reason and leverage search engines with reinforcement learning.
\newblock In \emph{Proceedings of the 2nd Conference on Language Modeling (COLM)}, 2025.

\bibitem[Li et~al.(2021)Li, Kuang, Wang, Liu, Chen, Wu, and Xiao]{shapley_counterfactual}
Jiahui Li, Kun Kuang, Baoxiang Wang, Furui Liu, Long Chen, Fei Wu, and Jun Xiao.
\newblock Shapley counterfactual credits for multi-agent reinforcement learning.
\newblock In \emph{Proceedings of the 27th ACM SIGKDD Conference on Knowledge Discovery and Data Mining}, pp.\  934--942, 2021.
\newblock \doi{10.1145/3447548.3467420}.

\bibitem[Lightman et~al.(2024)Lightman, Kosaraju, Burda, Edwards, Baker, Lee, Leike, Schulman, Sutskever, and Cobbe]{lightman2023verify}
Hunter Lightman, Vineet Kosaraju, Yuri Burda, Harrison Edwards, Bowen Baker, Teddy Lee, Jan Leike, John Schulman, Ilya Sutskever, and Karl Cobbe.
\newblock Let's verify step by step.
\newblock In \emph{International Conference on Learning Representations (ICLR)}, 2024.

\bibitem[Mohankumar et~al.(2021)Mohankumar, Begwani, and Singh]{clover}
Akash~Kumar Mohankumar, Nikit Begwani, and Amit Singh.
\newblock Diversity driven query rewriting in search advertising.
\newblock In \emph{Proceedings of the 27th ACM SIGKDD Conference on Knowledge Discovery and Data Mining}, pp.\  3423--3431, 2021.
\newblock \doi{10.1145/3447548.3467202}.

\bibitem[Mohankumar et~al.(2023)Mohankumar, Dodla, K, and Singh]{clover_unity}
Akash~Kumar Mohankumar, Bhargav Dodla, Gururaj K, and Amit Singh.
\newblock Unified generative \& dense retrieval for query rewriting in sponsored search.
\newblock In \emph{Proceedings of the 32nd ACM International Conference on Information and Knowledge Management}, pp.\  4745--4751, 2023.
\newblock \doi{10.1145/3583780.3615459}.

\bibitem[Radlinski et~al.(2008)Radlinski, Broder, Ciccolo, Gabrilovich, Josifovski, and Riedel]{relevance_revenue_ads}
Filip Radlinski, Andrei~Z. Broder, Peter Ciccolo, Evgeniy Gabrilovich, Vanja Josifovski, and Lance Riedel.
\newblock Optimizing relevance and revenue in ad search: A query substitution approach.
\newblock In \emph{Proceedings of the 31st Annual International ACM SIGIR Conference on Research and Development in Information Retrieval}, pp.\  403--410, 2008.
\newblock \doi{10.1145/1390334.1390404}.

\bibitem[Rashid et~al.(2018)Rashid, Samvelyan, Schroeder, Farquhar, Foerster, and Whiteson]{qmix}
Tabish Rashid, Mikayel Samvelyan, Christian Schroeder, Gregory Farquhar, Jakob Foerster, and Shimon Whiteson.
\newblock {QMIX}: Monotonic value function factorisation for deep multi-agent reinforcement learning.
\newblock In \emph{Proceedings of the 35th International Conference on Machine Learning (ICML)}, volume~80 of \emph{Proceedings of Machine Learning Research}, pp.\  4295--4304, 2018.

\bibitem[Shao et~al.(2024)Shao, Wang, Zhu, Xu, Song, Bi, Zhang, Zhang, Li, Wu, and Guo]{deepseekmath}
Zhihong Shao, Peiyi Wang, Qihao Zhu, Runxin Xu, Junxiao Song, Xiao Bi, Haowei Zhang, Mingchuan Zhang, Y.~K. Li, Y.~Wu, and Daya Guo.
\newblock {DeepSeekMath}: Pushing the limits of mathematical reasoning in open language models.
\newblock \emph{arXiv preprint arXiv:2402.03300}, 2024.

\bibitem[Sunehag et~al.(2018)Sunehag, Lever, Gruslys, Czarnecki, Zambaldi, Jaderberg, Lanctot, Sonnerat, Leibo, Tuyls, and Graepel]{vdn}
Peter Sunehag, Guy Lever, Audrunas Gruslys, Wojciech~Marian Czarnecki, Vinicius Zambaldi, Max Jaderberg, Marc Lanctot, Nicolas Sonnerat, Joel~Z. Leibo, Karl Tuyls, and Thore Graepel.
\newblock Value-decomposition networks for cooperative multi-agent learning based on team reward.
\newblock In \emph{Proceedings of the 17th International Conference on Autonomous Agents and MultiAgent Systems (AAMAS)}, pp.\  2085--2087, 2018.

\bibitem[Swaminathan et~al.(2017)Swaminathan, Krishnamurthy, Agarwal, Dud{\'i}k, Langford, Jose, and Zitouni]{slate_ope}
Adith Swaminathan, Akshay Krishnamurthy, Alekh Agarwal, Miroslav Dud{\'i}k, John Langford, Damien Jose, and Imed Zitouni.
\newblock Off-policy evaluation for slate recommendation.
\newblock In \emph{Advances in Neural Information Processing Systems}, volume~30, pp.\  3632--3642, 2017.

\bibitem[Tan et~al.(2026)Tan, Huang, Wu, Zhang, Zhuang, and Gu]{ragr1}
Zhiwen Tan, Jiaming Huang, Qintong Wu, Hongxuan Zhang, Chenyi Zhuang, and Jinjie Gu.
\newblock {RAG-R1}: Incentivizing the search and reasoning capabilities of {LLMs} through multi-query parallelism.
\newblock In \emph{Proceedings of the AAAI Conference on Artificial Intelligence}, volume~40, pp.\  33187--33195, 2026.
\newblock \doi{10.1609/aaai.v40i39.40603}.

\bibitem[Wang et~al.(2020)Wang, Zhang, Kim, and Gu]{shapley_q}
Jianhong Wang, Yuan Zhang, Tae-Kyun Kim, and Yunjie Gu.
\newblock Shapley {Q}-value: A local reward approach to solve global reward games.
\newblock In \emph{Proceedings of the AAAI Conference on Artificial Intelligence}, volume~34, pp.\  7285--7292, 2020.
\newblock \doi{10.1609/aaai.v34i05.6220}.

\bibitem[Wolpert \& Tumer(2001)Wolpert and Tumer]{difference_rewards}
David~H. Wolpert and Kagan Tumer.
\newblock Optimal payoff functions for members of collectives.
\newblock \emph{Advances in Complex Systems}, 4\penalty0 (2/3):\penalty0 265--279, 2001.
\newblock \doi{10.1142/S0219525901000188}.

\bibitem[Zhang et~al.(2026)Zhang, Chen, Zuo, Li, Wang, Xu, and Zheng]{value_ads_rewriting}
Xiao Zhang, Guanyu Chen, Boyang Zuo, Feng Li, Pengjie Wang, Jian Xu, and Bo~Zheng.
\newblock {VALUE}: Value-aware large language model for query rewriting via weighted trie in sponsored search.
\newblock In \emph{Proceedings of the 32nd ACM SIGKDD Conference on Knowledge Discovery and Data Mining}, pp.\  2584--2593, 2026.

\bibitem[Zhao et~al.(2025)Zhao, Yu, Xu, Singh, Shukla, and Akkiraju]{parallelsearch}
Shu Zhao, Tan Yu, Anbang Xu, Japinder Singh, Aaditya Shukla, and Rama Akkiraju.
\newblock Parallelsearch: Train your {LLMs} to decompose query and search sub-queries in parallel with reinforcement learning.
\newblock \emph{arXiv preprint arXiv:2508.09303}, 2025.

\bibitem[Zheng et~al.(2026)Zheng, Zhang, Yu, Wang, Xing, Dai, Liu, Bao, Huang, Huang, and Yu]{parallelr1}
Tong Zheng, Hongming Zhang, Wenhao Yu, Xiaoyang Wang, He~Xing, Runpeng Dai, Rui Liu, Huiwen Bao, Chengsong Huang, Heng Huang, and Dong Yu.
\newblock Parallel-{R1}: Towards parallel thinking via reinforcement learning.
\newblock In \emph{International Conference on Learning Representations (ICLR)}, 2026.

\end{thebibliography}

\newpage
\appendix
\newpage
\section{Negative and Higher-Order Coalition Effects}
\label{app:negative}

This section formalizes two mechanisms behind coalition-dependent action
credit: competition under finite downstream capacity and misalignment between
the criterion that determines selection and the value credited to selected
candidates. The analysis is deliberately abstract and is not intended as an
exact model of the production serving stack.

Consider the simplified top-$N$ set value
\begin{equation}
    R^{\mathrm{top}}(C)
    =
    \sum_{a\in
    \operatorname{TopN}_{\sigma}(\operatorname{Dedup}(C))}
    v(a),
    \label{eq:topn}
\end{equation}
where $\sigma(a)$ determines selection and $v(a)\geq 0$ is the value credited
to a selected candidate. For an action coalition $S$, let
$C(S)=\operatorname{Merge}(\{C_i:q_i\in S\})$ and define
\begin{equation}
    \Delta_i^{\mathrm{top}}(S)
    =
    R^{\mathrm{top}}(C(S\cup\{q_i\}))
    -R^{\mathrm{top}}(C(S)).
\end{equation}
The superscript distinguishes this static abstraction from the production
utility $R_t$ in Section~\ref{sec:problem}.

\paragraph{Selection--value alignment excludes negative insertion marginals.}
Assume that $\sigma$, together with a fixed tie-breaking rule, induces a total
order $\succ_\sigma$ that is co-monotone with credited value:
\begin{equation}
    a\succ_\sigma b \quad\Longrightarrow\quad v(a)\geq v(b).
\end{equation}
Then $R^{\mathrm{top}}$ is non-decreasing under candidate insertion, and hence
$\Delta_i^{\mathrm{top}}(S)\geq 0$ for every action $i$ and coalition
$S\subseteq A\setminus\{q_i\}$.

\paragraph{Proof.}
It suffices to consider inserting one candidate $a$ into a deduplicated pool.
If $a$ is a duplicate, the value is unchanged. If fewer than $N$ positions are
occupied, the change is $v(a)\geq0$. Otherwise, $a$ enters the selected set
only by displacing its lowest-ranked selected member $b$. This requires
$a\succ_\sigma b$, so co-monotonicity gives $v(a)\geq v(b)$ and the value
change is $v(a)-v(b)\geq0$. Inserting all candidates contributed by one action
as a sequence of such operations proves the result. \hfill$\square$

\paragraph{Finite capacity still induces higher-order interaction.}
This result rules out negative first-order insertion marginals under its
alignment assumptions, but it does not imply additivity. Once the available
positions are saturated, an action can have substantial value in a small
coalition and little or no additional value in a larger coalition. For
example, with $N=1$ and two non-duplicate candidates of values $10$ and $9$,
aligned selection gives
$R^{\mathrm{top}}(\{a\})=10$,
$R^{\mathrm{top}}(\{b\})=9$, and
$R^{\mathrm{top}}(\{a,b\})=10$, so the second-order interaction is
$10-10-9+0=-9$.

\paragraph{Relation to the auction-replay reference.}
Equation~\eqref{eq:topn} is a sufficient-condition abstraction rather than a
literal reconstruction of the auction replay in Eq.~\eqref{eq:replay_value}. In the
replay system, coalition insertion can change the winner/send set through
ranking, auction, capacity, and other replayable constraints. Negative
first-order marginals are therefore possible whenever the induced selection
logic is not perfectly aligned with the value credited by the replay
reference. Even under perfect alignment, finite capacity is sufficient to
produce redundancy and negative higher-order interactions. We consequently do
not attribute an observed negative marginal or interaction to a unique serving
mechanism without an additional intervention.

\section{Coalition Credit: Properties and Variants}
\label{app:credit}

Fix one generated action set $A=\{q_1,\ldots,q_m\}$ and the frozen surrogate
set function $v(S)=\hat R_0(S)$, with $v(\emptyset)=0$. For action $q_i$, write
\begin{equation}
    \widehat\Delta_i(S)
    =
    v(S\cup\{q_i\})-v(S),
    \qquad
    S\subseteq A\setminus\{q_i\}.
\end{equation}
CASI uses the Shapley aggregation
\begin{equation}
    c_i^{\mathrm{CASI}}
    =
    \sum_{S\subseteq A\setminus\{q_i\}}
    \frac{|S|!(m-|S|-1)!}{m!}\,
    \widehat\Delta_i(S).
    \label{eq:app_casi_credit}
\end{equation}

\paragraph{Equal total mass over predecessor sizes.}
For $k\in\{0,\ldots,m-1\}$, define the average marginal at predecessor size
$k$ as
\begin{equation}
    \bar\Delta_i^{(k)}
    =
    \frac{1}{\binom{m-1}{k}}
    \sum_{\substack{S\subseteq A\setminus\{q_i\}\\|S|=k}}
    \widehat\Delta_i(S).
    \label{eq:size_avg_marginal}
\end{equation}
For every subset of size $k$, the Shapley weight is
$k!(m-k-1)!/m!$. Since there are $\binom{m-1}{k}$ such subsets,
\begin{equation}
    \binom{m-1}{k}
    \frac{k!(m-k-1)!}{m!}
    =
    \frac{1}{m}.
\end{equation}
Therefore
\begin{equation}
    c_i^{\mathrm{CASI}}
    =
    \frac{1}{m}
    \sum_{k=0}^{m-1}\bar\Delta_i^{(k)}.
    \label{eq:casi_size_decomposition}
\end{equation}
Thus CASI gives equal total weight to each predecessor-size regime, while
averaging uniformly over coalitions within a regime. This identity is useful
for interpreting size-resolved credit diagnostics.

\paragraph{Efficiency.}
The Shapley permutation representation writes $c_i^{\mathrm{CASI}}$ as the
expected marginal contribution of $q_i$ along a uniformly random permutation
of the $m$ actions. For any fixed permutation, summing the $m$ successive
marginals telescopes from $v(\emptyset)$ to $v(A)$. Averaging over
permutations therefore gives
\begin{equation}
    \sum_{i=1}^{m}c_i^{\mathrm{CASI}}
    =v(A)-v(\emptyset)
    =\hat R_0(A).
\end{equation}
The optimizer below uses only relative within-set credit, but this identity
makes the allocation internally coherent before centering.

\paragraph{Single-context and alternative aggregation rules.}
The two endpoint rules used in our controlled comparisons are
\begin{align}
    c_i^{\mathrm{Singleton}}
    &=v(\{q_i\})-v(\emptyset),\\
    c_i^{\mathrm{LOO}}
    &=v(A)-v(A\setminus\{q_i\}).
\end{align}
Singleton uses only predecessor size $0$, whereas leave-one-out (LOO) uses
only predecessor size $m-1$. Banzhaf aggregation instead assigns equal weight
to individual predecessor subsets,
\begin{equation}
    c_i^{\mathrm{Banzhaf}}
    =
    \frac{1}{2^{m-1}}
    \sum_{S\subseteq A\setminus\{q_i\}}
    \widehat\Delta_i(S).
\end{equation}
The zero-sum optimizer is agnostic to which of these rules produces the
per-action credit vector; CASI uses Eq.~\eqref{eq:app_casi_credit} by default.

\paragraph{Additive case.}
Suppose the frozen evaluator is additive,
\begin{equation}
    v(S)=\sum_{q_i\in S}u_i.
\end{equation}
Then $\widehat\Delta_i(S)=u_i$ for every predecessor coalition $S$. Hence
\begin{equation}
    c_i^{\mathrm{CASI}}
    =c_i^{\mathrm{Singleton}}
    =c_i^{\mathrm{LOO}}
    =c_i^{\mathrm{Banzhaf}}
    =u_i.
\end{equation}
After within-set centering, all four rules therefore produce the same relative
credit $u_i-\bar u$. In this additive case, extra coalition evaluations are
redundant for credit allocation; outside it, they can access coalition values
that endpoint rules do not observe.

\paragraph{Scope of the attribution.}
CASI attributes downstream value within a fixed generated action set. It does
not attempt to identify the generation-time causal effect by which an early
autoregressive action may change which later actions are emitted. Likewise,
within-set centering is not interpreted as an action-independent
score-function baseline: the generated actions are jointly downstream of the
policy trajectory. The resulting update is therefore treated as structured
advantage shaping rather than as a new unbiased policy-gradient identity.

\section{Bounded Zero-Sum Advantage Shaping}
\label{app:zero_sum}

For rollout $g\in\mathcal{G}$, let $c_i^{(g)}$ denote the chosen per-action
credit and let $\hat A^{(g)}$ be the rollout-level advantage supplied by the
base optimizer. CASI first centers the credit,
\begin{equation}
    d_i^{(g)}
    =
    c_i^{(g)}-\frac{1}{m}\sum_{j=1}^{m}c_j^{(g)},
    \qquad
    \sum_i d_i^{(g)}=0,
\end{equation}
and normalizes it with the group-level scale
\begin{equation}
    s_d
    =
    \operatorname{Std}\!\left(\{d_i^{(g)}\}_{i,g}\right),
    \qquad
    u_i^{(g)}
    =
    \frac{d_i^{(g)}}{s_d+\epsilon}.
\end{equation}
In exact arithmetic, $\sum_i u_i^{(g)}=0$. Independent element-wise clipping,
however, does not preserve this identity in general:
\begin{equation}
    \sum_i \operatorname{clip}(u_i^{(g)},-\kappa,\kappa)
    \neq 0.
\end{equation}
We therefore use the bounded zero-sum projection from
Section~\ref{sec:optimization}.

\paragraph{Bounded zero-sum projection.}
For a normalized credit vector $u\in\mathbb{R}^m$ and clipping radius
$\kappa>0$, define
\begin{equation}
    z^\star
    =
    \arg\min_{z\in\mathbb{R}^m}
    \frac{1}{2}\|z-u\|_2^2
    \quad\text{s.t.}\quad
    \mathbf{1}^\top z=0,
    \qquad
    -\kappa\le z_i\le \kappa.
    \label{eq:bounded_projection_problem}
\end{equation}
The feasible set is non-empty, closed, and convex, and the objective is
strictly convex, so the projected vector $z^\star$ is unique. The KKT
conditions give a scalar equality multiplier $\lambda$ for which the
box-constrained coordinate minimizer is
\begin{equation}
    z_i^\star
    =
    \operatorname{clip}(u_i-\lambda,-\kappa,\kappa).
    \label{eq:bounded_projection_solution}
\end{equation}
It remains to choose $\lambda$ so that $\sum_i z_i^\star=0$. Define
\begin{equation}
    h(\lambda)
    =
    \sum_i \operatorname{clip}(u_i-\lambda,-\kappa,\kappa).
\end{equation}
The function $h$ is continuous and non-increasing, with
\begin{equation}
    h(\min_i u_i-\kappa)=m\kappa,
    \qquad
    h(\max_i u_i+\kappa)=-m\kappa.
\end{equation}
A root therefore exists by the intermediate value theorem and can be found by
monotone bisection on
\begin{equation}
    \lambda\in
    [\min_i u_i-\kappa,\;\max_i u_i+\kappa].
\end{equation}
The scalar root need not be unique if $h$ is flat at zero, but every such root
yields the same unique projected vector $z^\star$. For the small $m$ used in
our experiments, this scalar solve is negligible relative to evaluator
scoring.

\paragraph{Mean-preserving action advantages.}
CASI forms
\begin{equation}
    \widetilde A_i^{(g)}
    =
    \hat A^{(g)}+\beta z_i^{(g)}.
\end{equation}
Since $\sum_i z_i^{(g)}=0$,
\begin{equation}
    \frac{1}{m}\sum_i \widetilde A_i^{(g)}
    =
    \hat A^{(g)}.
\end{equation}
Thus shaping changes only the relative action-level coefficients within the
rollout. This does \emph{not} imply that the shaped policy gradient equals the
broadcast policy gradient, because different action spans multiply different
score functions. Setting $\beta=0$ exactly recovers ordinary broadcasting.
For structured generations, tokens belonging to action $q_i$ receive
$\widetilde A_i^{(g)}$, while shared reasoning and formatting tokens retain
the base rollout advantage $\hat A^{(g)}$.

\section{Permutation Sampling for Larger Action Sets}
\label{app:complexity}

For the default $m=5$, exact CASI evaluates the $2^5-1=31$ non-empty
coalitions. These are frozen-evaluator scores computed from cached action-level
retrieval results, not 31 downstream auction replays during training. Exact
enumeration is therefore practical at this scale and introduces no
coalition-sampling error.

For larger action sets, the same Shapley credit admits a permutation estimator.
Let $\pi^{(r)}$ be the $r$-th uniformly sampled permutation of the $m$ actions,
and let $S_i^{(r)}$ contain the actions preceding $q_i$ in that permutation.
Then
\begin{equation}
    \hat c_i
    =
    \frac{1}{M}
    \sum_{r=1}^{M}
    \left[
        \hat R_0(S_i^{(r)}\cup\{q_i\})
        -
        \hat R_0(S_i^{(r)})
    \right].
    \label{eq:permutation_estimator}
\end{equation}
Under uniform permutation sampling,
$\mathbb{E}[\hat c_i]=c_i^{\mathrm{CASI}}$. A naive implementation requires
$O(Mm)$ prefix evaluations, although repeated subsets across sampled
permutations can be deduplicated and scored once. This provides a direct
accuracy--cost trade-off when exact enumeration of all $2^m-1$ non-empty
coalitions is no longer practical.

\section{Offline Replay Landscape and Differential-Fidelity Audit}
\label{app:audit}

This section gives the construction and diagnostics behind the offline
counterfactual reference used in Sections~\ref{sec:exp_setup} and
\ref{sec:exp_offline}. The goal is to audit whether the frozen training
evaluator preserves coalition differences under the same reconstructed action
sets. It is distinct from the historical served-value audit in
Appendix~\ref{app:served_audit}.

\paragraph{Replay landscape.}
For each sampled request, the SFT checkpoint generates one five-action set
$A=\{q_1,\ldots,q_5\}$. We enumerate all $2^5-1=31$ non-empty coalitions and
replay each coalition through the downstream auction-replay path. For replay
execution $r$, the measured coalition value is
\begin{equation}
    y^{(r)}(S)
    =
    \sum_a
    \mathrm{ecpm}^{(r)}_{\mathrm{auction},S}(a)\,
    \mathrm{send}^{(r)}_S(a),
\end{equation}
where $\mathrm{send}^{(r)}_S$ is the replay winner/send indicator rather than
final production exposure. The audit contains $1{,}200$ requests. Every
coalition is replayed three times; a subset of $50$ requests is extended to
seven repeats, for a total of $117{,}800$ full-chain replay executions. The
repeat-aggregated coalition reference is
\[
    y_{\mathrm{audit}}(S)
    =
    \operatorname{median}_{r} y^{(r)}(S).
\]
No request is admitted or rejected according to its observed full-set replay
outcome.

This last design choice matters empirically. The full set is replay-positive
for $870/1{,}200$ requests, while $103/330$ requests with zero full-set value
still contain at least one nonzero proper subcoalition. Conditioning sample
admission on $y_{\mathrm{audit}}(A)>0$ would therefore remove requests that
contain informative coalition effects.

\paragraph{Training and sensitivity evaluator backends.}
The primary evaluator is \texttt{pr\_filter}, which matches the backend used
during RL: the full candidate pool is processed once and coalition values are
obtained by filtering the corresponding cached result. We also report
\texttt{pr\_rerun} as a sensitivity backend, where downstream ranking is
re-executed separately for each coalition. The two backends are not
interchangeable. Their insertion-edge sign agreement is only $0.6152$
(95\% CI $[0.6097,0.6209]$), although they agree much more strongly on
pair-interaction signs ($0.9697$). All primary action-credit comparisons are
therefore reported conditional on \texttt{pr\_filter}, the evaluator consumed
during RL.

\paragraph{Pair interactions.}
For actions $i$ and $j$, we use the second-order interaction
\begin{equation}
    I_{ij}
    =
    v(\{i,j\})-v(\{i\})-v(\{j\})+v(\emptyset).
\end{equation}
On the $1{,}171$ requests covered by \texttt{pr\_filter}, the replay
landscape contains $11{,}710$ action pairs. Replay interactions are negative
for $64.4\%$ of pairs, positive for $7.4\%$, and exactly zero for $28.2\%$,
with median absolute interaction $3.0978$. The \texttt{pr\_filter} evaluator
agrees with replay on $92.25\%$ of interaction signs
(95\% CI $[0.9140,0.9295]$).

\paragraph{Insertion-edge fidelity.}
The credit rules operate on neighboring-coalition differences
\begin{equation}
    \Delta_i^{\mathrm{audit}}(S)
    =
    y_{\mathrm{audit}}(S\cup\{q_i\})-y_{\mathrm{audit}}(S),
    \qquad
    \widehat\Delta_i(S)
    =
    \hat R_0(S\cup\{q_i\})-\hat R_0(S).
\end{equation}
Because exact-zero replay edges do not have a meaningful positive/negative
sign, the primary sign-agreement analysis is restricted to
$\Delta_i^{\mathrm{audit}}(S)\neq0$. Confidence intervals resample requests rather than
individual edges.

\begin{table}[h]
\centering
\caption{Insertion-edge sign agreement with replay, stratified by predecessor
size. \texttt{pr\_filter} is the training backend; \texttt{pr\_rerun} is a
sensitivity backend.}
\label{tab:app_edge_fidelity}
\begin{tabular}{lcc}
\toprule
Predecessor size & \texttt{pr\_filter} & \texttt{pr\_rerun} \\
\midrule
All nonzero replay edges & 0.5925 & 0.6137 \\
$|S|=0$ & 0.9839 & 1.0000 \\
$|S|=1$ & 0.6257 & 0.6572 \\
$|S|=2$ & 0.5617 & 0.5819 \\
$|S|=3$ & 0.5291 & 0.5485 \\
$|S|=4$ & 0.5375 & 0.5348 \\
\bottomrule
\end{tabular}
\end{table}

The near-perfect $|S|=0$ sign agreement is a trivial boundary case because an
insertion from the empty coalition is non-negative under the replay-value
construction. For nontrivial predecessor sizes, local marginal fidelity is
substantially lower and is weakest near the grand-coalition regime used by
LOO.

\paragraph{Replay variability and magnitude-conditioned reliability.}
Replay variability is estimated by repeatedly evaluating the \emph{same
insertion edge} under the same request conditions. On the deeply repeated
subset, median aggregation has higher split-half sign self-consistency than
mean aggregation: $0.7747$ versus $0.7214$ when restricted to nonzero replay
edges. Increasing the number of repeats from one to seven reduces the median
replay-noise scale by roughly $2.6\times$, but does not improve evaluator-vs.-
replay sign agreement. This indicates that evaluator error, rather than replay
randomness alone, is the main bottleneck in local-credit fidelity.

Reliability also depends strongly on edge magnitude. Sign agreement is near
chance for the smallest replay differences and rises for larger differences.
However, the replay reference itself has split-half sign self-consistency of
only about $0.775$ on nonzero edges. We therefore do not define a universal
absolute threshold $\tau$ at which signs are declared reliable. Instead, the
paper reports magnitude-conditioned fidelity together with replay
self-consistency as a reference diagnostic.

\section{Size-Resolved Credit-Fidelity Diagnostics}
\label{app:size_fidelity}

To distinguish local edge quality from complete action-level credit, we group
marginals by predecessor size. For request-level set function $v$ and action
$i$, define
\begin{equation}
    c_i^{(k)}
    =
    \frac{1}{\binom{m-1}{k}}
    \sum_{\substack{S\subseteq A\setminus\{q_i\}\\|S|=k}}
    \bigl[v(S\cup\{q_i\})-v(S)\bigr],
    \qquad k=0,\ldots,m-1.
\end{equation}
Credits are centered within request before comparison. For each $k$, the
evaluator credit is compared with replay credit defined at the \emph{same}
predecessor size. The final Shapley/CASI row instead compares complete
evaluator-defined Shapley credit with the replay-defined Shapley reference and
is included only for context. The size-matched rows and the Shapley row
therefore answer different diagnostic questions and should not be read as a
single cross-row estimator ranking.

\begin{table}[h]
\centering
\caption{Size-resolved credit fidelity for the training-time
\texttt{pr\_filter} evaluator. The $|S|=k$ rows are size-matched diagnostics;
the Shapley/CASI row compares against the replay-defined Shapley reference and
is not directly comparable with the size-matched rows.}
\label{tab:app_size_fidelity}
\begin{tabular}{lccccc}
\toprule
Context & Sign$_{\rm all}$ & Sign$_{\rm nz}$ & Spearman & Top action & Median cosine \\
\midrule
$|S|=0$ & 0.6056 & 0.6424 & 0.3521 & 0.3274 & 0.5027 \\
$|S|=1$ & 0.6135 & 0.6356 & 0.3321 & 0.3422 & 0.4635 \\
$|S|=2$ & 0.6003 & 0.5994 & 0.2539 & 0.3365 & 0.3310 \\
$|S|=3$ & 0.5862 & 0.5683 & 0.1749 & 0.3112 & 0.2236 \\
$|S|=4$ & 0.5687 & 0.5381 & 0.1117 & 0.2902 & 0.1316 \\
\midrule
Shapley / CASI & 0.6029 & 0.6178 & 0.3128 & 0.3299 & 0.4237 \\
\bottomrule
\end{tabular}
\end{table}

Among the size-matched rows, direction, rank, and vector-angle fidelity
generally deteriorate as the predecessor coalition grows; top-action agreement
shows the same overall pattern but is not strictly monotone. The Shapley/CASI
row uses a different replay reference and should not be interpreted as a
cross-row ranking.

\section{Common-Reference Credit-Rule Comparison}
\label{app:estimator_fidelity}

The size-resolved analysis above compares each estimator with a replay target
defined at the same predecessor size. To compare complete credit rules against
one common reference, we instead use centered replay-defined Shapley credit.
This reference is intentionally aligned with CASI's allocation rule; the
experiment measures alignment with that replay-defined allocation rather than
establishing an estimator-neutral notion of true action credit.

\paragraph{Estimator definitions and metrics.}
We compare CASI, Singleton, leave-one-out (LOO), and Banzhaf credits. All raw
credit vectors are centered within request before evaluation. We report pooled
sign agreement, sign agreement on nonzero target components, pooled Spearman
correlation, top-action agreement, and the median cosine between normalized
five-dimensional credit vectors. The latter is the primary vector-level
metric.

\begin{table}[h]
\centering
\caption{Full credit-fidelity metrics against centered replay-defined Shapley
credit under the training-time \texttt{pr\_filter} backend.}
\label{tab:app_estimator_full}
\begin{tabular}{lccccc}
\toprule
Rule & Sign$_{\rm all}$ & Sign$_{\rm nz}$ & Spearman & Top action & Median cosine \\
\midrule
CASI & 0.6029 & 0.6178 & 0.3128 & 0.3299 & \textbf{0.4237} \\
Singleton & 0.5947 & 0.6193 & 0.3135 & 0.3205 & 0.3877 \\
Banzhaf & 0.6060 & 0.6149 & 0.3014 & 0.3371 & 0.3800 \\
LOO & 0.5875 & 0.5693 & 0.1761 & 0.3226 & 0.2269 \\
\bottomrule
\end{tabular}
\end{table}

On the primary metric, CASI exceeds Singleton by $+0.0360$
(95\% CI $[+0.0036,+0.0677]$), Banzhaf by $+0.0437$
($[+0.0083,+0.0806]$), and LOO by $+0.1968$
($[+0.1272,+0.2682]$). These confidence intervals are request-level paired
bootstrap intervals.

\paragraph{Sensitivity to evaluator backend.}
Using the alternative \texttt{pr\_rerun} backend changes the comparison
substantially. Its median-cosine values are $0.6011$ for CASI, $0.6029$ for
Singleton, $0.5984$ for Banzhaf, and $0.2804$ for LOO. CASI is slightly lower
in point estimate than Singleton on this sensitivity backend. This marks an
applicability boundary: under \texttt{pr\_rerun}, additional coalition
queries do not improve the primary metric relative to Singleton. It also limits
evaluator-independent generalization; CASI changes which coalition values are
queried from a fixed evaluator rather than correcting evaluator error.

\paragraph{Additive correlated-error null.}
To test whether CASI's gain can arise merely from averaging correlated
evaluator noise, we construct an interaction-free null while preserving the
observed within-request error covariance. For each request $q$, replay
singleton values define
\begin{equation}
    u_i^q=y_{\mathrm{audit},q}(\{i\}),
    \qquad
    v_{\rm null}^q(S)=\sum_{i\in S}u_i^q.
\end{equation}
We then sample a donor request $q'$ and transfer its complete coalition-level
evaluator residual vector
\begin{equation}
    r^{q'}(S)
    =
    \hat R_{0,q'}(S)-y_{\mathrm{audit},q'}(S)
\end{equation}
as a block:
\begin{equation}
    \hat R_{0,\rm null}^{q}(S)
    =
    v_{\rm null}^{q}(S)+r^{q'}(S).
\end{equation}
This preserves the observed correlation geometry among the donor's coalition
errors rather than resampling individual edges independently.

The null analysis uses the stricter subset of $870$ requests for which the
required normalized credit statistics are all defined and repeats the
construction for $1{,}000$ null worlds.

\begin{table}[h]
\centering
\caption{Additive correlated-error null. Observed gaps are computed on the
same $870$-request subset.}
\label{tab:app_null}
\begin{tabular}{lccc}
\toprule
Statistic & Observed & Null median & Null 95\% interval \\
\midrule
$\Delta\cos_{\rm med}(\mathrm{CASI}-\mathrm{LOO})$
& +0.2221 & -0.1105 & $[-0.1462,-0.0813]$ \\
$\Delta\cos_{\rm med}(\mathrm{CASI}-\mathrm{Singleton})$
& +0.0359 & -0.0932 & $[-0.1231,-0.0685]$ \\
$\rho(\Gamma_{\rm ctx}^{\rm norm},g^{\rm CASI-LOO})$
& -0.0270 & +0.0052 & $[-0.0630,+0.0758]$ \\
$\rho(\Gamma_{\rm ctx}^{\rm norm},g^{\rm CASI-Singleton})$
& -0.0699 & +0.0092 & $[-0.0575,+0.0756]$ \\
\bottomrule
\end{tabular}
\end{table}

The observed overall CASI gaps are not reproduced by the additive null with
the measured correlated-error geometry. By contrast, the interaction-strength
trend does not provide the same support: the CASI--LOO trend is fully
compatible with the null, while the CASI--Singleton trend is weak and negative.
We therefore do not claim that CASI's advantage increases monotonically with
interaction strength.

\section{Historical Served-Value Observational Audit}
\label{app:served_audit}

This audit addresses a different question from the counterfactual fidelity
analysis above. Historical downstream value $V_0$ is observed on served
traffic, whereas the evaluated checkpoint generates its own action set.
Consequently, the audit does not provide arbitrary counterfactual values
$V_0(S)$ and is not used as action-level ground truth.

\paragraph{Real-context replay.}
For $4{,}486$ target queries, the SFT checkpoint generates eight rollouts and a
deterministic request-level hash selects one fixed action set. The full set is
replayed using that request's original user and geographic context. Successful
replay and matched historical $V_0$ are both available for $3{,}540$ requests.
Because zero-auction outcomes are a normal structural outcome of the replay
path, we report value ranking on the $1{,}438$ auction-positive requests and
analyze auction activation separately. On this subset,
\begin{equation}
    \rho(y_{\rm replay},V_0)=0.346.
\end{equation}
We interpret this as a moderate observational association rather than
counterfactual validation.

\paragraph{Matched context-shuffling placebo.}
To isolate the role of request context, we hold every query and generated
action set fixed and assign each request another request's real context using a
derangement. All $3{,}540$ requests are replayed under both real and shuffled
contexts.

\begin{table}[h]
\centering
\caption{Matched real-context versus shuffled-context replay.}
\label{tab:app_context_shuffle}
\begin{tabular}{lcc}
\toprule
Metric & Real context & Shuffled context \\
\midrule
Auction-positive rate & 40.6\% & 32.7\% \\
$\rho(y,V_0)$ on both-positive requests ($n=700$) & 0.374 & 0.351 \\
\bottomrule
\end{tabular}
\end{table}

The activation difference is $+7.9$ percentage points
(95\% CI $[+6.0,+9.8]$ pp; McNemar $p\approx6.9\times10^{-16}$).
Thus request context materially affects whether replay reaches a positive
auction outcome. Conditional on both replays being positive, however, the
paired difference in rank correlation is only $+0.023$
(95\% CI $[-0.061,+0.103]$), so we do not detect an improvement in
cross-request value ranking from the correct context.

\paragraph{Temporal self-consistency benchmark.}
As a scale reference, we compute pairwise Spearman correlation of historical
$V_0$ for the same queries across four snapshot dates. The six pairwise
correlations range from $0.383$ to $0.455$, with mean $0.428$. We use this only
as a temporal self-consistency benchmark. It is not an upper bound on
replay--$V_0$ association because the two comparisons have different
estimands.

Overall, the historical audit supports two limited statements: replay value
retains moderate observational association with historical business value, and
request context strongly affects auction activation. It does not establish a
validated mapping from counterfactual replay $y_{\mathrm{audit}}(S)$ to production
counterfactual utility.

\section{Implementation Validation for Exact Coalition Shaping}
\label{app:implementation}

We separately validate that the exact coalition path and the bounded zero-sum
transformation in Appendix~\ref{app:zero_sum} are realized by the training
implementation. This section is an implementation check rather than a
policy-quality evaluation.

\paragraph{Exact coalition path.}
A real-training smoke test executes the exact $31$-coalition path for three
steps with the same \texttt{pr\_filter} evaluator used by training. All three
steps complete successfully.

\paragraph{Why independent clipping is insufficient.}
Before the bounded projection was introduced, the implementation applied
independent element-wise clipping to normalized centered credits. In the smoke
test, the first two steps have zero clip hits and maintain the zero-sum
residual at approximately $10^{-6}$--$10^{-5}$. At the third step, only
$2.86\%$ of action credits hit the clip bound, yet
\begin{equation}
    \left|\sum_i z_i\right|_{\max}
\end{equation}
jumps to $15.0$. This directly confirms that independent clipping can severely
violate the zero-sum identity even when the clip-hit rate is small.

The final implementation instead solves the bounded zero-sum projection in
Eq.~\eqref{eq:bounded_projection_solution}. In subsequent long-running
Singleton and CASI training diagnostics, the maximum zero-sum residual remains
on the order of $2$--$3\times10^{-5}$ even near clip-hit peaks. These checks
support implementation correctness only; they are not used as evidence of
policy improvement.

\end{document}